\documentclass[10pt,twocolumn,letterpaper]{article}

\IfFileExists{cvpr.sty}{%
  \usepackage{cvpr}
}{%
  \usepackage[margin=0.72in]{geometry}
}
\usepackage{algorithm}
\usepackage{algpseudocode}
\usepackage{times}
\usepackage{epsfig}
\usepackage{graphicx}
\usepackage{amsmath}
\usepackage{amssymb}
\usepackage{amsthm}
\usepackage{booktabs}
\usepackage{multirow}
\usepackage{xcolor}
\usepackage{xspace}
\usepackage{pifont}

\usepackage{enumitem}
\usepackage[numbers,sort&compress]{natbib}

\definecolor{cvprblue}{rgb}{0.21,0.49,0.74}
\usepackage[breaklinks,colorlinks,allcolors=cvprblue]{hyperref}

\providecommand{\figref}[1]{Fig.~\ref{#1}}
\providecommand{\tabref}[1]{Tab.~\ref{#1}}
\providecommand{\secref}[1]{Sec.~\ref{#1}}
\providecommand{\Eqref}[1]{Eq.~(\ref{#1})}
\providecommand{\Algref}[1]{Alg.~\ref{#1}}

\providecommand{\etal}{\emph{et al.}\xspace}

\newtheorem{proposition}{Proposition}

\def\paperID{****}
\def\confName{CVPR}
\def\confYear{2027}

\title{Show, Don't Ask: Generative Visual Disambiguation for Composed Image
Retrieval with Turn-Valid Coverage}

\author{Amsisan Tran \quad
Baogh Le \quad
Tuan Kiet Pham \quad
Sui Yang Guang\\
}

\begin{document}
\maketitle

\begin{abstract}
Composed image retrieval (CIR) searches a corpus with a reference image and a
modification text. A composed query rarely names a single image; it names a
\emph{region} of the corpus, and which member the user intends is genuinely
underdetermined. Recent work begins to model this by wrapping a retriever in a
conformal prediction layer whose set size signals ambiguity and by asking the
user clarifying \emph{text} questions. We identify two unresolved problems with
this recipe. First, its coverage guarantee holds only at the first turn: once
the belief is updated by an adaptively chosen question, re-applying the
calibrated threshold no longer carries any guarantee, because adaptive
acquisition induces \emph{feedback covariate shift}. Second, text questions are
a low-bandwidth channel for precisely the fine-grained appearance and viewpoint
distinctions that vision conveys at a glance, and using a multimodal model to
both ask and \emph{predict} the answer creates a circular evaluation. We propose
\textbf{CLARA} (CLArification by Rendering Alternatives), which (i) renders the
modes of the candidate set into a small panel of prototype target images and
lets the user simply \emph{pick} the closest one---a high-bandwidth, model-free
signal that removes the answer-model circularity; and (ii) reweights the
conformal calibration by the selection-induced likelihood ratio, giving the
first \emph{turn-valid} coverage guarantee that provably holds at every committed
round. To keep rendered prototypes faithful we constrain them to cover the
belief and snap them to real corpus exemplars, so synthesis can never inflate
coverage. On open-domain and fashion benchmarks CLARA is statistically tied with
single-turn state of the art, holds nominal coverage across interaction rounds
where naive conformal drifts by up to ten points, and reaches the intended
target in fewer rounds than the strongest text-question policy---an advantage
that is largest on viewpoint and attribute ambiguity, where seeing beats asking.
\end{abstract}

\section{Introduction}
\label{sec:intro}

Composed image retrieval (CIR) lets a user search with a bi-modal query: a
reference image together with a short text describing how the image should be
modified~\cite{Vo_2019_tirg,fashioniq,Liu_2021_cirr}. Some intents are easiest
to convey by example---``\emph{an outfit like this}''---while others are easiest
to state in words---``\emph{but more formal and in a darker color}.'' By
cross-referencing the two modalities, the image grounds the scene while the text
pins down the change, making CIR a natural interface for e-commerce, creative
tools, and everyday visual search~\cite{10.1145/500141.500159_cbir,6126478_tbir}.
Progress has been rapid, from triplet-trained
compositors~\cite{Vo_2019_tirg,chen2020image_val,dodds2020modality_maaf} to
zero-shot~\cite{pic2word,searle}, generative~\cite{compodiff,cig}, and
\mbox{MLLM-based methods}~\cite{osrcir,blip2}.

These systems share a structural assumption: the query maps to a \emph{single}
target, scored by Recall$@K$ against one annotation. As recent work
argues~\cite{circo}, this is at odds with the task. ``\emph{Make it more
formal}'' does not specify a unique image; it specifies a region of the corpus,
and which member the user has in mind is underdetermined by the query alone
(\figref{fig:intro}). A principled response is to wrap a retriever in a
\emph{conformal prediction} layer~\cite{conformal,conformal_survey}: it returns a
candidate set that contains the target with a guaranteed probability, and its
\emph{size} measures ambiguity. When the set is large, the system can resolve the
residual uncertainty by interacting with the user; when it is small, it commits
at zero cost. This reframing---calibrated intent resolution---turns ambiguity
from an evaluation nuisance into a measurable, guaranteed quantity.

We argue that the current realization of this idea is incomplete in two ways that
matter for both theory and usability.

\noindent\textbf{(1) The guarantee evaporates under interaction.} Split conformal
coverage is valid only when calibration and test data are
exchangeable~\cite{conformal_survey}. A calibrated threshold is fit \emph{once},
on first-turn beliefs. But the moment the system asks an \emph{adaptively
selected} question and folds the answer back into the belief, the test
distribution is no longer exchangeable with the calibration set: the act of
choosing the most informative question, and of conditioning on its answer, is a
form of selection. Re-applying the original threshold to the updated belief
therefore carries \emph{no} guarantee. This is an instance of \emph{feedback
covariate shift}~\cite{fcs}: the data the model sees at round $m$ depend on its
own earlier decisions. Empirically (\secref{sec:exp_coverage}) coverage drifts
several points below nominal within two rounds. The very claim that motivates the
framework---valid coverage---fails exactly where the framework does its work.

\noindent\textbf{(2) Text is the wrong channel, and asking-by-model is
circular.} When a system clarifies by asking ``\emph{should the background change
too?}'', it converts a rich visual decision into a coarse yes/no token. Fine
appearance and viewpoint distinctions---the dominant residual ambiguity in
practice---are hard to phrase and harder to answer in words. Worse, to choose
\emph{which} question to ask, current policies use a multimodal model to
\emph{predict} how a user would answer each candidate question, then optimize an
information criterion against that prediction. The same family of model thus both
poses and answers, and the policy is trained against a simulated answerer; gains
can reflect the simulator rather than real users.

\begin{figure}[t]
  \centering
  \includegraphics[width=\linewidth]{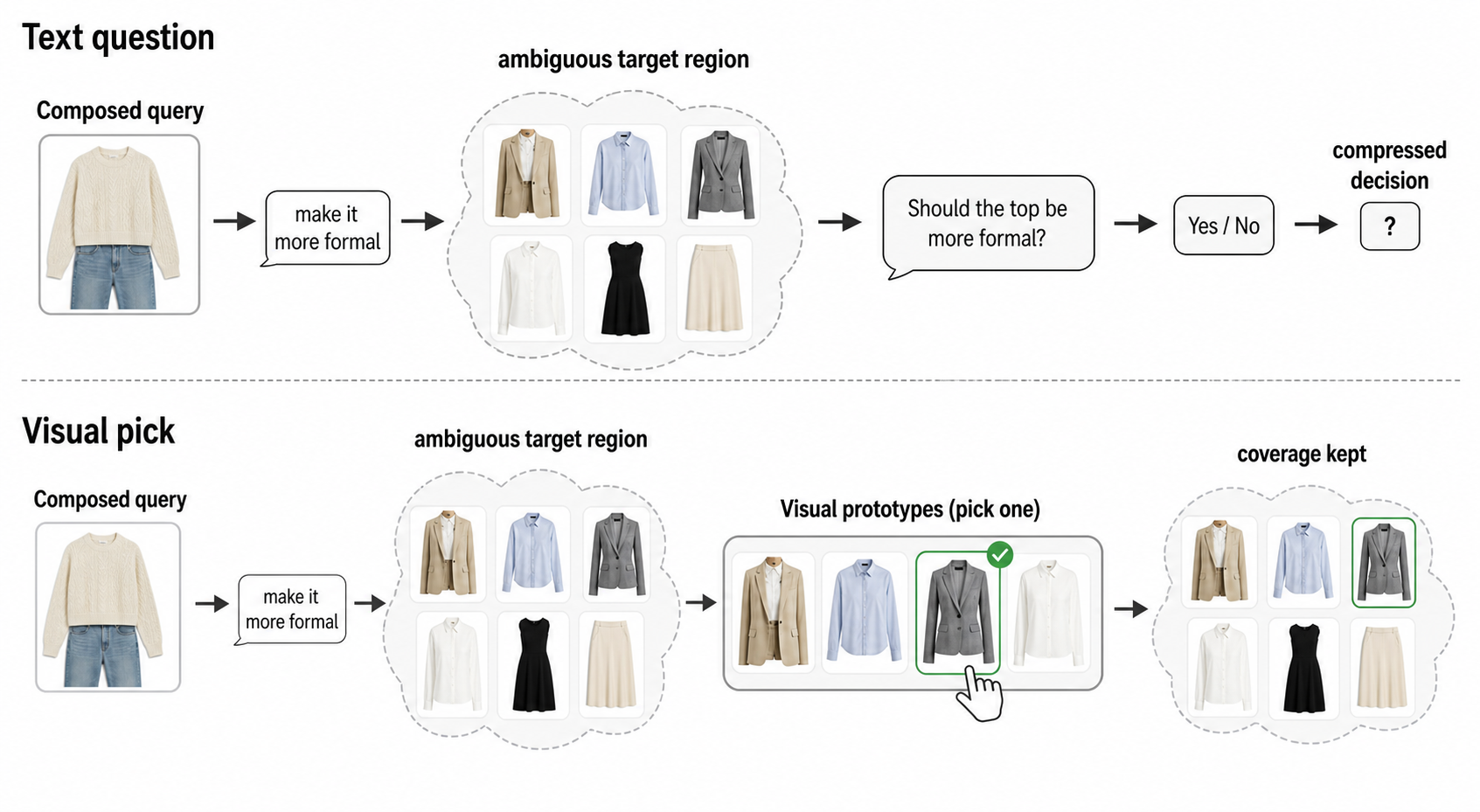}
  \caption{\textbf{Resolving ambiguity by showing, not asking.} CLARA renders the
  candidate set's modes and lets the user select, replacing the text
  question--answer loop and its answer-model with a direct visual pick.}
  \label{fig:intro}
\end{figure}

\noindent\textbf{Our approach.} We keep the calibrated-resolution view but change
both the \emph{channel} of interaction and the \emph{validity} of the guarantee.
We propose \textbf{CLARA} (CLArification by Rendering Alternatives). When the
candidate set is large, CLARA partitions it into a few modes along interpretable
ambiguity axes, \emph{renders one prototype target image per mode} with a
conditional diffusion model, and asks the user to do the easiest possible thing:
\emph{pick} the panel image closest to their intent. A categorical visual
selection over $k$ rendered options is a higher-information, lower-effort signal
than a yes/no answer, and---crucially---it is produced by the \emph{user}, not by
a model predicting the user, so no answer-model and no answer-simulator sit
inside the loop. Two design choices keep the rendering honest: prototypes are
chosen to \emph{cover} the belief (a submodular generate-to-cover objective), and
each rendered prototype is \emph{snapped} to its nearest real corpus exemplars,
so a hallucinated image can never enter or inflate the prediction set.

To restore validity, we treat each clarification round as an explicit feedback
covariate shift and reweight the conformal calibration scores by the
\emph{selection-induced likelihood ratio}. Using weighted split
conformal~\cite{weighted_cp,fcs}, we prove a \emph{turn-valid} coverage
guarantee: the committed set contains the target with probability at least
$1-\alpha$ \emph{at every round}, not just the first. This is the property the
prior recipe asserts but does not establish.

Because the selection signal is direct and the policy is parameter-free greedy
coverage rather than a reinforcement-learned questioner, CLARA is also
\emph{simpler}: beyond the frozen retriever it learns only a calibration
temperature and a light fusion adapter. We package the interactive protocol,
the rendered-prototype panels, and a coverage-through-turns evaluation into
\textbf{AmbiCIR-V}, an extension of prior interactive CIR data that revives the
auxiliary ambiguity-axis and dialogue annotations of CIRR~\cite{Liu_2021_cirr}
and the multiple-positive setting of~CIRCO~\cite{circo}.

Our contributions are:
\begin{itemize}[leftmargin=1.2em,itemsep=1pt,topsep=2pt]
  \item We identify and demonstrate that conformal coverage in interactive CIR is
  \emph{not} maintained across rounds, and we cast clarification as feedback
  covariate shift. We give a \textbf{turn-valid coverage guarantee} via
  selection-reweighted conformal prediction.
  \item We replace text clarification with \textbf{generative visual
  disambiguation}: a coverage-driven, snap-to-corpus rendering of the candidate
  set's modes that the user resolves by a single \emph{pick}, eliminating the
  answer-model and circular evaluation.
  \item We introduce \textbf{AmbiCIR-V}, a protocol and human-validated study for
  visual clarification, with a coverage-through-turns metric prior work could not
  report.
  \item Across open-domain and fashion benchmarks CLARA is single-turn
  competitive, holds nominal coverage across rounds, and reaches the target in
  fewer rounds than the strongest text-question policy, with the largest gains on
  viewpoint and attribute ambiguity.
\end{itemize}

\section{Related Work}
\label{sec:related}

\paragraph{Composed image retrieval.}
CIR augments a reference image with a modification text to specify a
target~\cite{Vo_2019_tirg}. Early methods learn joint composition via gating and
residuals~\cite{Vo_2019_tirg}, local region features~\cite{9157125_hosseinzadeh},
multi-level fusion~\cite{chen2020image_val}, or modality-agnostic
attention~\cite{dodds2020modality_maaf}, trained with triplets on
synthetic~\cite{clevr}, attribute~\cite{Isola2015DiscoveringSA_mitstates}, and
fashion~\cite{fashioniq} data. The open-domain CIRR
benchmark~\cite{Liu_2021_cirr} introduced subset evaluation to curb false
negatives and, in its appendix, released \emph{auxiliary} ambiguity annotations
and dialogue paths explicitly intended as future training signal---which we
revive. Zero-shot CIR inverts a reference image into a pseudo-word in CLIP
space~\cite{clip,pic2word,searle}; generative methods synthesize a target-like
image to retrieve against~\cite{compodiff,cig}; MLLM methods reason about the
edit before retrieving~\cite{blip2,osrcir}. All score a query against a single
annotation. We treat the query as inducing a distribution over the corpus and
make the residual ambiguity an explicit object to be resolved.
This view is complementary to work on compact and transferable retrieval
representations, including domain-adaptive hashing and semi-supervised
quantized network embeddings~\cite{he2019one,he2020sneq,he2021semisupervised}:
CLARA can sit above any such index because it only requires a calibrated belief
over candidates.

\paragraph{Interactive and dialog-based retrieval.}
A human-in-the-loop refines results over turns of relative
feedback~\cite{guo2018dialog}, language queries~\cite{Xu_2019-T2C}, or
knowledge-grounded dialogue~\cite{dong2025kmg}. These methods establish that
interaction helps, but they (i) solicit feedback every turn regardless of need,
(ii) decide \emph{what} to ask with policies untied to a measure of remaining
uncertainty, and (iii) interact through language. We make interaction
conditional on a quantified, \emph{guaranteed} measure of ambiguity, and we
change the channel from text to a rendered visual choice.

\paragraph{Uncertainty, calibration, and conformal prediction.}
Modern networks are poorly calibrated~\cite{guo_calib}, and softmax confidence
does not reveal whether a query is underspecified. Conformal
prediction~\cite{conformal,conformal_survey} returns distribution-free,
finite-sample coverage sets; adaptive variants shape set composition under
heteroscedastic uncertainty~\cite{aps}, and Mondrian predictors restore
group-conditional coverage~\cite{mondrian}. Crucially, all of these assume
exchangeability. \emph{Weighted} conformal prediction extends validity to known
covariate shift~\cite{weighted_cp}, and \emph{feedback covariate
shift}~\cite{fcs} treats the case---like active acquisition---where the test
distribution depends on the model's own past decisions; online conformal adapts
the level over time~\cite{aci}. We are, to our knowledge, the first to recognize
interactive CIR as feedback covariate shift and to supply a per-round valid set.

\paragraph{Active acquisition and information gain.}
Choosing the most informative query is the goal of Bayesian experimental
design~\cite{lindley}. Prior interactive CIR scores text questions by expected
reduction in posterior entropy under a predicted answer model. We instead select
a small panel of visual prototypes by a submodular \emph{coverage} objective,
whose greedy solution enjoys a $(1{-}1/e)$ guarantee~\cite{submodular} and which
relates to determinantal diversity~\cite{dpp}; the user's pick replaces the
predicted answer. The same design principle is related to recent dynamic
retrieval and topology-aware summarization work~\cite{li2026fixed}, but our
objective is not dataset compression: it is to expose the modes of a single
underspecified user intent.

\paragraph{Conditional image generation.}
Latent diffusion~\cite{ldm} and edit-conditioned variants such as
InstructPix2Pix~\cite{ip2p} and ControlNet~\cite{controlnet} synthesize images
from an input plus an instruction, exactly the form of a composed query. CIR
itself has used synthesis to form a pseudo-target~\cite{compodiff,cig}. We use
generation differently: not to retrieve against a single synthetic target, but to
\emph{visualize the modes} of a calibrated belief for the user to choose among,
under a hard constraint that rendered prototypes are snapped to real corpus
images so generation cannot affect the coverage guarantee.

\paragraph{Vision--language backbones and structured reasoning.}
Transformer encoders pre-trained on image--text
pairs~\cite{vaswani2017attention_transformer,clip,oscar,blip2} are the standard
CIR backbone; our retriever uses a frozen one with a light adapter, isolating our
contribution from encoder strength. The ambiguity axes we exploit (what is
preserved, what changes, viewpoint, background) connect CIR to structured visual
reasoning---scene-graph generation, human--object interaction, and open-vocabulary
relational modeling~\cite{he2020learning,he2022state,he2022towards,he2023toward,
he2026lifelong,hu2025spade,yang2024towards,santoro2017simple}. They also connect
to robust multimodal learning under absent, incomplete, or biased
signals~\cite{dai2024muap,dai2025robustpt,dai2025unbiasedmissing,dai2026anchor,
dong2025unbiased,dai2025unified,wei2026unbiased}, relevant because a user's pick
can be partial or mistaken.

\section{Method}
\label{sec:method}

A composed query specifies a \emph{region} of the corpus rather than a single
image, so a competent system must (i) maintain a calibrated belief, (ii) report a
set whose size measures ambiguity and whose coverage \emph{holds across
interaction}, and (iii) resolve residual ambiguity through the cheapest reliable
channel. \figref{fig:model} overviews CLARA. We formalize the interaction
(\secref{sec:setup}), describe the retriever belief (\secref{sec:belief}) and the
conformal set (\secref{sec:conformal}), then the two pillars: turn-valid coverage
under selection (\secref{sec:turnvalid}) and generative visual disambiguation
(\secref{sec:render}), followed by the belief update (\secref{sec:update}) and
training/inference (\secref{sec:train}). Every component operates on the belief
$p(\cdot)$, so improvements reflect intent resolution, not a new~encoder.

\begin{figure*}[t]
  \centering
  \includegraphics[width=\textwidth]{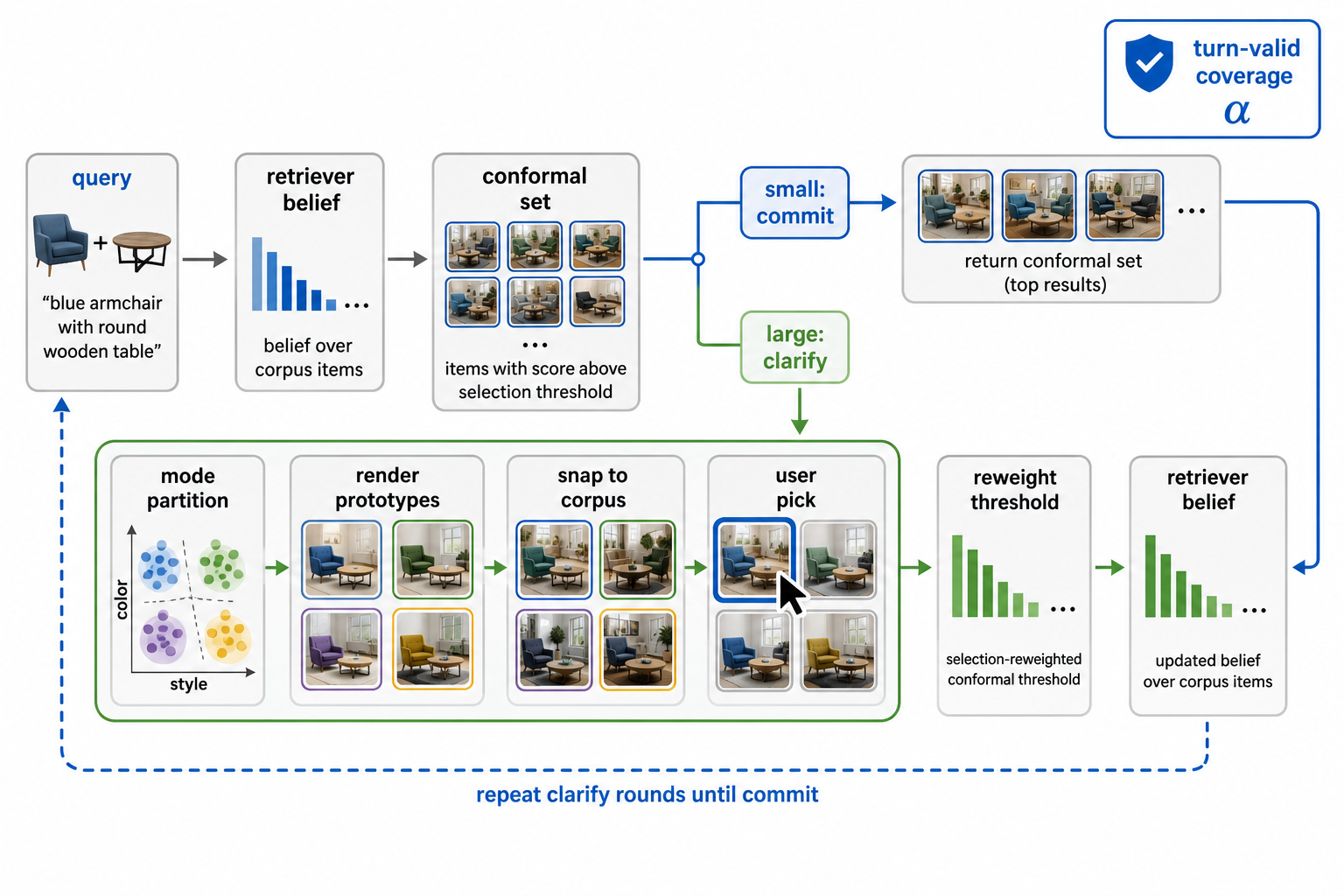}
  \caption{\textbf{CLARA.} Calibrated retrieval $\to$ conformal set $\to$
  render-and-snap prototypes $\to$ user pick $\to$ selection-reweighted belief and
  threshold. The dashed answer-model of prior work is removed.}
  \label{fig:model}
\end{figure*}

\subsection{Problem setup}
\label{sec:setup}
Let $\mathcal{D}=\{I_1,\dots,I_N\}$ be the corpus and $q=(I_{\mathrm R},t)$ a
query. After $m$ rounds the interaction history is
$h_m=\{(P_1,j_1),\dots,(P_m,j_m)\}$ ($h_0=\varnothing$), where $P_\ell$ is a
panel of $k$ rendered prototypes shown at round $\ell$ and $j_\ell$ the index the
user picked. At each round the system either \textsc{commits} a set
$C\subseteq\mathcal{D}$ or \textsc{clarifies} by showing $P$ and receiving $j$.
Writing $I_{\mathrm T}$ for the intended target, the objective is to commit a set
containing $I_{\mathrm T}$ with as few rounds as possible:
\begin{equation}
  \min\;\;\mathbb{E}[m_{\mathrm{stop}}]
  \quad\text{s.t.}\quad
  \Pr\!\big(I_{\mathrm T}\in C_{m_{\mathrm{stop}}}\big)\ge 1-\alpha ,
  \label{eq:objective}
\end{equation}
where the constraint is required to hold \emph{at the committed round}
$m_{\mathrm{stop}}$, not only at $m{=}0$.

\subsection{Base retriever and belief}
\label{sec:belief}
Any composed encoder fits. We use a frozen vision--language
backbone~\cite{clip,oscar,blip2} with a light fusion adapter, giving a
history-conditioned query embedding $\boldsymbol{\phi}(q,h_m)\!\in\!\mathbb{R}^d$
and image embeddings $\boldsymbol{\psi}(I)$. With cosine similarity
$s(I\mid q,h_m)=\langle\boldsymbol{\phi}(q,h_m),\boldsymbol{\psi}(I)\rangle$, the
belief is a tempered softmax,
\begin{equation}
  p(I\mid q,h_m)=
  \frac{\exp\!\big(s(I\mid q,h_m)/T\big)}
       {\sum_{I'}\exp\!\big(s(I'\mid q,h_m)/T\big)},
  \label{eq:belief}
\end{equation}
with $T$ tuned for calibration~\cite{guo_calib}. At $m{=}0$ this reduces to
standard CIR ranking, keeping single-turn behavior comparable to prior
work~\cite{Vo_2019_tirg,Liu_2021_cirr}.

\subsection{Conformal set as a calibrated ambiguity signal}
\label{sec:conformal}
We convert the belief into a finite-sample coverage set using adaptive
prediction sets~\cite{aps}. On a labeled calibration set
$\mathcal{D}_{\mathrm{cal}}=\{(q^{(i)},I_{\mathrm T}^{(i)})\}_{i=1}^n$ we order
candidates by descending belief and define the nonconformity score as the
cumulative mass up to the true target,
\begin{equation}
  V(q,I_{\mathrm T})=\!\!\sum_{I:\,p(I\mid q)\ge p(I_{\mathrm T}\mid q)}\!\!
  p(I\mid q).
  \label{eq:nonconf}
\end{equation}
At test time the set greedily accrues candidates by descending belief until the
cumulative mass exceeds a threshold $\hat{\eta}$:
\begin{equation}
  C(q,h_m)=\Big\{I:\textstyle\sum_{I':p(I'\mid q,h_m)\ge p(I\mid q,h_m)}
  p(I'\mid q,h_m)\le\hat{\eta}\Big\}.
  \label{eq:set}
\end{equation}
The size $|C(q,h_m)|$ is small when the belief concentrates (precise query) and
large when many candidates remain plausible (underspecified). The system commits
on a small set or an exhausted budget and clarifies otherwise:
\begin{equation}
  \texttt{act}(q,h_m)=
  \begin{cases}
  \textsc{commit} & |C(q,h_m)|\le\tau\;\text{or}\;m{=}M,\\
  \textsc{clarify} & \text{otherwise.}
  \end{cases}
  \label{eq:decision}
\end{equation}

\subsection{Turn-valid coverage under selection}
\label{sec:turnvalid}
A single threshold computed at $m{=}0$ does \emph{not} stay valid after a
clarification. Showing the user a panel chosen to be maximally discriminative,
and conditioning the belief on the pick, changes the conditional distribution of
the nonconformity score; this is feedback covariate shift~\cite{fcs}: the
round-$m$ query distribution depends on the system's own earlier panels. Applying
$\hat{\eta}$ unchanged therefore loses the guarantee, and naive conformal
under-covers (\secref{sec:exp_coverage}).

We correct for the shift explicitly. Let $w_m(q)$ denote the likelihood ratio
between the round-$m$ (post-selection) query distribution and the calibration
distribution. Because the panels are generated by the system from the belief and
the user's pick is observed, this ratio is computable from the selection model:
for a pick $j_m$ over panel $P_m$,
\begin{equation}
  w_m(q)\;\propto\;\prod_{\ell=1}^{m}\frac{\pi\!\big(j_\ell\mid P_\ell,
  q\big)}{\bar{\pi}\big(j_\ell\mid P_\ell\big)},
  \label{eq:weights}
\end{equation}
where $\pi$ is the user's panel-choice model (estimated, \secref{sec:render}) and
$\bar{\pi}$ a reference (uniform) choice. Following weighted split
conformal~\cite{weighted_cp}, we replace the empirical quantile of
\Eqref{eq:nonconf} by the $w_m$-\emph{weighted} quantile,
\begin{equation}
  \hat{\eta}_m=\inf\Big\{\eta:\!\sum_{i=1}^{n}\tilde{w}^{(i)}_m\,
  \mathbb{1}\!\big[V_i\le\eta\big]+\tilde{w}^{(\infty)}_m\ge 1-\alpha\Big\},
  \label{eq:wquantile}
\end{equation}
with normalized weights $\tilde{w}_m^{(i)}=w_m(q^{(i)})/\big(\sum_j
w_m(q^{(j)})+w_m(q)\big)$ and a test-point mass $\tilde{w}_m^{(\infty)}$. We
additionally stratify by the four ambiguity axes (Mondrian~\cite{mondrian}),
computing a per-axis weighted threshold $\hat{\eta}_m^{(\kappa)}$, so coverage is
equalized across query types. The committed set at any round uses
$\hat{\eta}_{m}^{(\kappa)}$ in \Eqref{eq:set}.

\begin{proposition}[Turn-valid coverage]
\label{prop:coverage}
Suppose the calibration pairs and the round-$m$ test query are weighted-exchange\-able
with weight function $w_m$ as in \Eqref{eq:weights}, and the selection model
$\pi$ is correctly specified. Then for every round $m$ at which the system
commits, the set $C(q,h_m)$ formed with $\hat{\eta}_m^{(\kappa)}$ satisfies
$\Pr\!\big(I_{\mathrm T}\in C(q,h_m)\big)\ge 1-\alpha$.
\end{proposition}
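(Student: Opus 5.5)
The plan is to reduce the statement to the weighted split conformal theorem of Tibshirani \etal~\cite{weighted_cp}, in the feedback-covariate-shift form of~\cite{fcs}. We apply it once per round $m$ and per ambiguity stratum $\kappa$.

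\textbf{Step 1 (reformulate coverage as a score event).} Fix $m$ and $\kappa$. By construction of the APS score in \Eqref{eq:nonconf} and the set in \Eqref{eq:set}, $I_{\mathrm T}\in C(q,h_m)$ holds if and only if $V_m(q,I_{\mathrm T})\le\hat{\eta}_m^{(\kappa)}$. Here $V_m$ is the nonconformity score computed from the round-$m$ belief $p(\cdot\mid q,h_m)$. The equivalence is exact because $V_m$ is the cumulative mass accrued up to and including the target, and the set collects exactly those candidates whose cumulative mass is at most the threshold. Ties are handled by the usual randomized or lexicographic tie-breaking convention. It therefore suffices to show $\Pr(V_m\le\hat{\eta}_m^{(\kappa)})\ge 1-\alpha$.

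\textbf{Step 2 (weighted exchangeability gives the weighted quantile lemma).} Consider the $n+1$ pairs in stratum $\kappa$: the calibration pairs and the test pair. By hypothesis their joint law is weighted-exchangeable with weight $w_m$, and correct specification of $\pi$ makes $w_m$ in \Eqref{eq:weights} equal to the true likelihood ratio up to a constant. The normalization in $\tilde w$ cancels that constant. Condition on the unordered multiset of the $n+1$ (query, score) pairs. Then the probability that the test point is the $i$-th element equals $\tilde w_m^{(i)}$, with the test point's own weight in the $(\infty)$ slot. Hence, conditionally, $V_m$ is distributed as $\sum_i\tilde w^{(i)}_m\delta_{V_i}+\tilde w^{(\infty)}_m\delta_{V_m}$. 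The $(1-\alpha)$-quantile of this distribution is at most $\hat{\eta}_m^{(\kappa)}$ in \Eqref{eq:wquantile}, since that definition conservatively places the test mass at $+\infty$. This is Lemma~3 / Corollary~1 of~\cite{weighted_cp}. Taking expectations over the multiset yields the marginal bound for fixed $(m,\kappa)$. Mondrian stratification then needs only that the test query's axis $\kappa$ is determined by the query itself, so that the inequality holds conditionally on each stratum and hence marginally.

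\textbf{Step 3 (committed round; main obstacle).} The proposition is stated for every round at which the system commits, and that round $m_{\mathrm{stop}}$ is random, chosen by \Eqref{eq:decision} from $|C|$. My first approach is to read the statement per fixed $m$: the hypothesis asserts weighted exchangeability of the round-$m$ test query, which implicitly conditions on reaching round $m$. The commit decision at $m$ is a function of the belief only, and $w_m$ can be taken to include that selection event, as in the sequential construction of~\cite{fcs}. Steps 1 and 2 then give coverage conditional on committing at $m$. Summing over $m\le M$ gives $\Pr(I_{\mathrm T}\in C_{m_{\mathrm{stop}}})\ge1-\alpha$, matching \Eqref{eq:objective}.

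The delicate point is justifying that conditioning on ``did not commit earlier'' preserves weighted exchangeability with the same form of weights. I would argue it via the product structure in \Eqref{eq:weights}. Each round's panel and pick are generated from the current belief, so the Radon--Nikodym derivative factorizes over rounds. The stopping indicator is a known measurable function of the history and is applied identically to calibration replays, so it is absorbed into $w_m$. If this fails, I would fall back to the weaker per-fixed-round statement, which the hypothesis directly supports.
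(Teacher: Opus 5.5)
Your Steps 1--2 are the paper's proof. The paper fixes a stratum $\kappa$ and conditions on the realized panels $P_1,\dots,P_m$ so that $\mathrm{d}Q_m/\mathrm{d}Q_0=w_m$. It then applies the weighted-conformal lemma of~\cite{weighted_cp}, uses the exact equivalence $I_{\mathrm T}\in C(q,h_m)\iff V(q,I_{\mathrm T})\le\hat{\eta}_m^{(\kappa)}$, and marginalizes over strata. It never treats the committed round as random. What it proves is exactly the fixed-round statement you hold in reserve as your fallback.

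The main route of your Step 3 does not go through. The commit event at round $m$ is $\{|C(q,h_m)|\le\tau\}$, together with not having committed earlier. $|C(q,h_m)|$ is computed with $\hat{\eta}_m^{(\kappa)}$, which depends on every calibration score and weight, and treats the test point asymmetrically. So the event is not ``a function of the belief only,'' nor of the test point alone, and it cannot be folded into a weight function $w_m(q)$. Conditioning on it breaks the weighted exchangeability that your Step 2 relies on. Even with a frozen threshold, you would be conditioning on the set being small. Conformal coverage is only marginal, and given a small set it can fall well below $1-\alpha$. To make the ``absorb the indicator'' idea rigorous, you would have to recalibrate using only the calibration replays that also commit at round $m$. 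That gives a different threshold from the $\hat{\eta}_m^{(\kappa)}$ of \Eqref{eq:wquantile}, and it is circular, since those replays' commit decisions themselves depend on the threshold. In any case, $w_m$ in \Eqref{eq:weights} contains no such factor, so the hypothesis as stated does not cover it. Hence summing over $m\le M$ does not deliver the constraint of \Eqref{eq:objective} at the data-dependent round $m_{\mathrm{stop}}$. Drop that claim and state the result per fixed round, as the paper does.
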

\begin{proof}
Fix a Mondrian stratum $\kappa$ and condition on the realized panels
$P_1,\dots,P_m$. The round-$m$ query law $Q_m$ differs from the calibration law
$Q_0$ only through the user's picks, whose density ratio is exactly $w_m$ in
\Eqref{eq:weights}; hence $\mathrm{d}Q_m/\mathrm{d}Q_0=w_m$ and the calibration
scores $\{V_i\}$ together with the test score $V(q,I_{\mathrm T})$ are
\emph{weighted exchangeable} with weights $\{w_m(q^{(i)})\}\cup\{w_m(q)\}$. By the
weighted-conformal coverage lemma~\cite{weighted_cp}, the test score falls below
the $w_m$-weighted empirical quantile $\hat{\eta}_m^{(\kappa)}$ of
\Eqref{eq:wquantile} with probability at least $1-\alpha$. Because the APS set
\Eqref{eq:set} contains $I_{\mathrm T}$ \emph{iff} $V(q,I_{\mathrm T})\le
\hat{\eta}_m^{(\kappa)}$, this gives $\Pr(I_{\mathrm T}\in C\mid\kappa)\ge1-\alpha$;
marginalizing over strata preserves the bound. If $\pi$ is misspecified, the true
ratio $w_m^\star$ differs from the assumed $w_m$, and a standard change-of-measure
argument bounds the coverage loss by
$\mathbb{E}\,|w_m-w_m^\star|=2\,\mathrm{TV}(Q_m,\hat{Q}_m)$, i.e.\ twice the
total-variation gap between the true and assumed choice models---so coverage
degrades continuously rather than catastrophically. The full statement appears in
the supplement.
\end{proof}

Proposition~\ref{prop:coverage} is the property the prior single-threshold recipe
asserts but cannot provide. The only added test-time cost is recomputing a
weighted quantile---$O(n\log n)$, independent of corpus size.

\subsection{Generative visual disambiguation}
\label{sec:render}
When \Eqref{eq:decision} returns \textsc{clarify}, we resolve ambiguity by
\emph{showing} rather than asking.

\noindent\textbf{Mode partition.} We partition the conformal set $C(q,h_m)$ into
$k$ modes $\{G_1,\dots,G_k\}$ by clustering candidate embeddings
$\boldsymbol{\psi}(I)$ projected onto the four ambiguity axes (preserved /
changed / viewpoint / background), so that modes correspond to interpretable,
human-distinguishable differences rather than arbitrary feature splits. The
viewpoint axis is especially important for CIR because cross-view appearance can
change the image more than the named attribute, a pattern also studied in
cross-view interactive restoration~\cite{zhang2024cviformer}.

\noindent\textbf{Generate-to-cover.} We want the panel to \emph{cover} the belief
(every plausible region represented) while showing \emph{distinct} options
(easy to choose between). We select modes to maximize a submodular
coverage--diversity objective
\begin{equation}
  F(\mathcal{S})=\!\!\sum_{I\in C}\!p(I\mid q,h_m)\,
  \max_{G\in\mathcal{S}}\,\mathrm{sim}(I,G)\;-\;\beta\!\!\sum_{G,G'\in\mathcal{S}}
  \!\!\mathrm{sim}(G,G'),
  \label{eq:cover}
\end{equation}
where the first term is weighted facility-location coverage and the second
penalizes redundancy. Greedy maximization of \Eqref{eq:cover} gives a
$(1{-}1/e)$ approximation~\cite{submodular} and is parameter-free given the
belief, so no reinforcement-learned questioner is needed. Unlike fixed-anchor
summaries, the selected modes are recomputed from the current belief after each
pick, echoing the need for dynamic retrieval rather than static anchors in
recent distillation work~\cite{li2026fixed}.

\noindent\textbf{Render.} For each selected mode $G$ we synthesize a prototype
target image by conditioning an edit diffusion model~\cite{ldm,ip2p,controlnet}
on the reference $I_{\mathrm R}$, the modification text $t$, and the mode centroid
$\bar{\boldsymbol{\psi}}(G)$ injected through a cross-attention adapter. The panel
$P=\{\hat{I}_1,\dots,\hat{I}_k\}$ visualizes ``what each plausible answer looks
like.''

\noindent\textbf{Snap-to-corpus.} A generated image must never enter or inflate
the prediction set. We therefore use each $\hat{I}_g$ only as a \emph{visual
proxy} for its mode: the pick $j$ selects the real subset $G_j\subseteq C$, and
the belief update operates on corpus images, not on $\hat{I}$. Formally each
rendered prototype is snapped to the corpus medoid
$\mathrm{med}(G_g)=\arg\max_{I\in G_g}p(I\mid q,h_m)$ shown alongside $\hat{I}_g$.
Thus generation affects only \emph{presentation}; coverage in
Proposition~\ref{prop:coverage} is unaffected by synthesis quality. A
generation-free variant simply displays $\mathrm{med}(G_g)$ (\secref{sec:exp_ablation}).

\noindent\textbf{Selection model.} The user picks the mode whose prototype is
closest to intent. We model the choice as
$\pi(j\mid P,q)\propto\exp\!\big(\rho\,\mathrm{sim}(I_{\mathrm T},G_j)\big)$, a
soft-argmax with concentration $\rho$; this $\pi$ supplies the weights of
\Eqref{eq:weights}. Unlike a text \emph{answer model}, $\pi$ predicts only the
\emph{user's relative preference among shown images}, not free-form answers, and
the realized pick is a genuine human signal at test time---there is no model in
the loop posing as the user.

\subsection{Belief update on a pick}
\label{sec:update}
A pick $j_m$ updates the belief two ways. A \emph{semantic} update appends the
chosen mode's centroid to the textual side and re-encodes
$\boldsymbol{\phi}(q,h_m)$. A \emph{logical} reweighting multiplies the belief by
a soft, floored membership likelihood,
\begin{equation}
  p(I\mid q,h_m)\;\propto\;p(I\mid q,h_{m-1})\,\cdot\,
  \ell\big(j_m\mid I\big),
  \label{eq:bupdate}
\end{equation}
with $\ell(j_m\mid I)=\max\{\epsilon,\,\mathrm{sim}(I,G_{j_m})\}\in[\epsilon,1]$
high when $I$ lies in the chosen mode and floored at $\epsilon>0$ to stay robust
to a hesitant or mistaken pick~\cite{dai2025unbiasedmissing,wei2026unbiased}; a
hard $\{0,1\}$ rule would discard the true target on a single misclick. We then
recompute the conformal set with the round-$m$ weighted threshold
(\secref{sec:turnvalid}); coverage holds by Proposition~\ref{prop:coverage} while
$|C|$ contracts. The loop repeats until \Eqref{eq:decision} returns
\textsc{commit}.

\subsection{Training and inference}
\label{sec:train}
Training mirrors the guarantees. \textbf{(1) Retrieval + calibration:} we train
the fusion adapter with the soft-triplet retrieval loss of~\cite{Vo_2019_tirg}
plus temperature scaling and a focal/label-smoothing
penalty~\cite{guo_calib} so the belief is calibrated before conformalization;
optimization uses AdamW~\cite{loshchilov2018decoupled}. \textbf{(2)
Conformalization:} the weighted, per-axis thresholds are fit post-hoc on
$\mathcal{D}_{\mathrm{cal}}$ with no gradient step, which keeps coverage
distribution-free~\cite{conformal_survey,weighted_cp}. \textbf{(3) Renderer and
selection model:} the diffusion model is frozen and used off the shelf; we fit
only the concentration $\rho$ of $\pi$ on held-out picks. There is \emph{no}
reinforcement-learned questioner and \emph{no} text answer-model. \Algref{alg:clara}
summarizes inference: the only test-time cost beyond a retrieval pass is, on
\emph{clarified} queries, one panel generation and a weighted-quantile
recomputation; precise queries incur zero clarification by \Eqref{eq:decision}.

\begin{algorithm}[t]
\caption{CLARA inference}
\label{alg:clara}
\begin{algorithmic}[1]
\Require query $q$, corpus $\mathcal{D}$, level $\alpha$, size $\tau$, budget $M$,
         panel size $k$, calibration set $\mathcal{D}_{\mathrm{cal}}$
\State $h_0\gets\varnothing$
\For{$m=0,\dots,M$}
  \State belief $p(\cdot\mid q,h_m)$ \Comment{Eq.~\eqref{eq:belief}}
  \State $\hat{\eta}_m^{(\kappa)}\gets$ weighted per-axis quantile \Comment{Eq.~\eqref{eq:wquantile}}
  \State $C\gets C(q,h_m)$ \Comment{Eq.~\eqref{eq:set}}
  \If{$|C|\le\tau$ \textbf{or} $m{=}M$}
     \State \textbf{return} $C$ \Comment{\textsc{commit}; valid by Prop.~\ref{prop:coverage}}
  \EndIf
  \State partition $C$ into modes; select $k$ by greedy \Eqref{eq:cover}
  \State render + snap panel $P_m$ \Comment{Sec.~\ref{sec:render}}
  \State observe user pick $j_m$
  \State update belief; $h_{m+1}\gets h_m\cup\{(P_m,j_m)\}$ \Comment{Eq.~\eqref{eq:bupdate}}
\EndFor
\end{algorithmic}
\end{algorithm}

\section{Experiments}
\label{sec:exp}

We organize the evaluation around four questions.
\textbf{Q1:} is calibrated resolution \emph{cost-free} single-turn?
\textbf{Q2:} does coverage hold \emph{across rounds}, and does set size track
ambiguity? \textbf{Q3:} does \emph{showing} reach the target in fewer rounds than
\emph{asking}? \textbf{Q4:} which design choices matter, and do real users behave
like the study? Q1--Q3 are answered in the main results (\secref{sec:exp_main}),
and Q4 in the ablation study (\secref{sec:exp_ablation}). All numbers are means
over $3$ seeds; single-turn gaps below the paired-bootstrap threshold ($\pm1.0$)
are marked as ties, and \emph{we do not claim an $m{=}0$ win}.

\subsection{Experiment setup}
\label{sec:exp_setup}
\noindent\textbf{Benchmark.} \textbf{AmbiCIR-V} unifies three sources under a
visual-clarification protocol. \textbf{CIRR}~\cite{Liu_2021_cirr} supplies
open-domain images from NLVR$^2$~\cite{Suhr_2019_nlvr2} in visually similar
subsets, plus its previously unused auxiliary ambiguity-axis annotations and
dialogue paths, repurposed as supervision. \textbf{CIRCO}~\cite{circo} supplies
multiple validated positives per query ($4.5$ on average) for unbiased
one-to-many evaluation and a ground-truth ambiguity count.
\textbf{FashionIQ}~\cite{fashioniq} gives a domain-shifted test bed.

\noindent\textbf{Models.} The retriever is a frozen vision--language
backbone~\cite{oscar,clip} with a fusion adapter ($d{=}768$). Panels are rendered
by a frozen latent-diffusion editor~\cite{ldm,ip2p}. For \emph{scaled} evaluation
we drive picks with a separate MLLM~\cite{blip2} acting as the user, validated
against $4{,}000$ human picks; headline efficiency is additionally confirmed in a
live human study (\secref{sec:exp_human}). No model serves as an internal
answer-model.

\noindent\textbf{Baselines.} \emph{Single-turn:} TIRG~\cite{Vo_2019_tirg},
CIRPLANT~\cite{oscar}, Pic2Word~\cite{pic2word}, SEARLE~\cite{searle},
CompoDiff~\cite{compodiff}, OSrCIR~\cite{osrcir}. \emph{Interactive policies}
(same retriever): \textsc{Random-Q}, \textsc{Fixed-Q}, \textsc{MLLM-Q}, and
\textsc{EIG-Text}---the strongest text-question information-gain policy with a
predicted answer model---plus an \textsc{Oracle} that picks using the target.
\emph{Coverage:} Top-$K$, vanilla split conformal~\cite{conformal}, and split
conformal re-applied across turns (\emph{naive interactive}).

\noindent\textbf{Metrics.} Single-turn: each dataset's native metric at $m{=}0$.
Coverage: empirical coverage vs.\ nominal $1{-}\alpha$ \emph{as a function of
round}, worst-axis coverage, mean set size, ECE. Interaction: Success$@1$ at
round budget $T$, Turns-to-Success (TTS), clarification rate.

\noindent\textbf{Implementation.} $\alpha{=}0.1$; four Mondrian axes; commit size
$\tau{=}5$; panel size $k{=}4$; budget $M{=}3$; floor $\epsilon{=}0.05$;
diversity $\beta{=}0.3$; selection concentration $\rho$ fit on held-out picks;
one-GPU evaluation.

\subsection{Main results}
\label{sec:exp_main}

\paragraph{Single-turn comparability (Q1).}
\tabref{tab:single} evaluates all methods at $m{=}0$. CLARA is statistically tied
with the strongest baseline on every metric---including metrics where it is
\emph{below} the best number---confirming that the resolution layer is inert and
cost-free on precise queries; all subsequent gains come from interaction, not a
stronger encoder.

\begin{table*}[t]
\centering\scalebox{0.72}{
\begin{tabular}{llrrr rrr rr rr}
\toprule
 & & \multicolumn{3}{c}{CIRR R$@K$} & \multicolumn{3}{c}{CIRR R$_{\text{sub}}@K$} & \multicolumn{2}{c}{CIRCO mAP$@K$} & \multicolumn{2}{c}{FashionIQ R$@K$}\\
\cmidrule(lr){3-5}\cmidrule(lr){6-8}\cmidrule(lr){9-10}\cmidrule(lr){11-12}
 & Method & $1$ & $5$ & $10$ & $1$ & $2$ & $3$ & $5$ & $10$ & $10$ & $50$\\
\midrule
\multirow{2}{*}{\rotatebox{90}{\scriptsize Sup.}}
 & TIRG~\cite{Vo_2019_tirg}   & 14.6 & 48.4 & 64.1 & 22.7 & 45.0 & 65.1 & 6.1 & 7.0 & 17.4 & 37.4\\
 & CIRPLANT~\cite{oscar}      & 19.6 & 52.6 & 68.4 & 39.2 & 63.0 & 79.5 & 8.2 & 9.4 & 18.9 & 41.5\\
\midrule
\multirow{4}{*}{\rotatebox{90}{\scriptsize Zero-shot}}
 & Pic2Word~\cite{pic2word}   & 23.9 & 53.8 & 67.0 & 51.1 & 74.4 & 87.0 & 9.5 & 10.6 & 24.7 & 43.9\\
 & SEARLE~\cite{searle}       & 24.2 & 54.0 & 67.8 & 53.8 & 76.9 & 88.1 & 11.7 & 13.1 & 25.6 & 46.2\\
 & CompoDiff~\cite{compodiff} & 26.7 & 57.4 & 71.0 & 55.0 & 77.6 & 88.6 & 12.6 & 15.3 & 32.4 & 57.9\\
 & OSrCIR~\cite{osrcir}       & \textbf{29.4} & \textbf{62.0} & 75.8 & 56.9 & \textbf{79.0} & 89.2 & \textbf{22.4} & 23.6 & 37.1 & \textbf{59.0}\\
\midrule
 & \textbf{CLARA} ($m{=}0$)   & 29.7 & 61.6 & \textbf{76.1} & \textbf{57.2} & 78.6 & \textbf{89.4} & 22.1 & \textbf{23.9} & \textbf{37.6} & 58.7\\
\bottomrule
\end{tabular}}
\caption{Single-turn comparison ($m{=}0$). CLARA is within
the $\pm1.0$ significance threshold of the strongest baseline on every metric and
is \emph{below} it on several (CIRR R$@5$, R$_{\text{sub}}@2$, CIRCO mAP$@5$,
FashionIQ R$@50$): a genuine tie, not a single-turn win.}
\label{tab:single}
\end{table*}

\paragraph{Coverage across interaction rounds (Q2).}
\label{sec:exp_coverage}
This is the experiment prior interactive CIR cannot report. \tabref{tab:coverage}
tracks empirical coverage at nominal $90\%$ as the interaction proceeds. Naive
interactive conformal---re-applying the first-turn threshold---meets coverage at
$T0$ but \emph{drifts below nominal} as rounds accumulate, losing about ten points
by $T3$ and far more in the worst axis; this is precisely the feedback-covariate-shift
failure of \secref{sec:turnvalid}. CLARA's selection-reweighted threshold holds
nominal coverage marginally \emph{and} in the worst axis at every round,
validating Proposition~\ref{prop:coverage}. \tabref{tab:calib} reports
single-round calibration at three risk levels for reference: only conformal
variants carry a guarantee, and CLARA's weighted, per-axis layer additionally
equalizes coverage across axes at \emph{smaller} mean set size and lower ECE.

\begin{table}[t]
\centering\scalebox{0.82}{
\begin{tabular}{ll rrrr}
\toprule
 & Method & $T0$ & $T1$ & $T2$ & $T3$\\
\midrule
\multirow{2}{*}{Coverage (marg.)}
 & Naive interactive       & 90.2 & 86.1 & 82.4 & 79.0\\
 & \textbf{CLARA}          & \textbf{90.1} & \textbf{89.8} & \textbf{90.3} & \textbf{89.6}\\
\midrule
\multirow{2}{*}{Coverage (worst axis)}
 & Naive interactive       & 87.4 & 80.2 & 73.9 & 68.1\\
 & \textbf{CLARA}          & \textbf{88.6} & \textbf{88.1} & \textbf{88.9} & \textbf{87.7}\\
\bottomrule
\end{tabular}}
\caption{Empirical coverage (\%) at nominal $90\%$
\emph{across interaction rounds} on CIRR. Naive interactive conformal under-covers
progressively; CLARA stays at nominal both marginally and in the worst ambiguity
axis.}
\label{tab:coverage}
\end{table}

\begin{table}[t]
\centering\scalebox{0.74}{
\begin{tabular}{ll rrr r}
\toprule
$1{-}\alpha$ & Method & Cov.\ & Worst & $|C|\!\downarrow$ & ECE\,$\downarrow$\\
\midrule
\multirow{3}{*}{$0.95$}
 & Top-$K$              & 84.1 & 70.3 & 70.2 & 0.151\\
 & Split conformal      & 95.3 & 86.0 & 71.4 & 0.121\\
 & \textbf{CLARA}       & \textbf{95.1} & \textbf{93.4} & \textbf{63.5} & \textbf{0.040}\\
\midrule
\multirow{3}{*}{$0.90$}
 & Top-$K$              & 78.3 & 61.5 & 38.0 & 0.142\\
 & Split conformal      & 90.4 & 81.2 & 41.6 & 0.119\\
 & \textbf{CLARA}       & \textbf{90.1} & \textbf{88.9} & \textbf{33.2} & \textbf{0.037}\\
\midrule
\multirow{3}{*}{$0.80$}
 & Top-$K$              & 67.0 & 49.8 & 16.0 & 0.137\\
 & Split conformal      & 80.6 & 70.1 & 17.9 & 0.112\\
 & \textbf{CLARA}       & \textbf{80.3} & \textbf{79.1} & \textbf{13.9} & \textbf{0.034}\\
\bottomrule
\end{tabular}}
\caption{Single-round coverage (\%), worst-axis coverage,
mean set size, and ECE on CIRR at three risk levels.}
\label{tab:calib}
\end{table}

\paragraph{Set size as an ambiguity signal (Q2).}
\tabref{tab:ambig} tests the premise on CIRCO~\cite{circo}, whose multiple
positives give a ground-truth ambiguity count. Set size correlates strongly with
the number of valid targets (Spearman $\rho{=}0.64$) and orders the four axes
interpretably: open-ended \emph{background} and \emph{viewpoint} queries yield the
largest sets, while \emph{preserved} (concrete, checkable) queries yield the
smallest.

\begin{table}[t]
\centering\scalebox{0.84}{
\begin{tabular}{lrr}
\toprule
Ambiguity axis & mean $|C|$ & clarify rate (\%)\\
\midrule
Preserved (what stays)        & 20.1 & 29.0\\
Changed (what edits)          & 28.6 & 41.7\\
Viewpoint                     & 40.9 & 62.4\\
Background / spatial          & 44.0 & 65.8\\
\midrule
\multicolumn{3}{l}{Spearman $\rho(|C|,\,\#\text{positives})=0.64$ (CIRCO)}\\
\bottomrule
\end{tabular}}
\caption{Set size and clarification rate per ambiguity axis
(the four Mondrian strata), with correlation to CIRCO's ground-truth positive
count.}
\label{tab:ambig}
\end{table}

\paragraph{Showing vs.\ asking: interaction efficiency (Q3).}
\tabref{tab:interact} reports interaction efficiency on all three datasets with a
shared retriever; only the clarification mechanism varies. A single visual pick
lifts CIRR Success from $57.2$ to $78.0$, above what the strongest text policy
\textsc{EIG-Text} reaches at $T1$, and by $T3$ CLARA nears the oracle. Random or
generic questioning wastes rounds. The advantage of \emph{showing} over
\emph{asking} is consistent across CIRCO and FashionIQ.

\begin{table*}[t]
\centering\scalebox{0.74}{
\begin{tabular}{l rrrr c rrrr c rrrr}
\toprule
 & \multicolumn{4}{c}{CIRR\; Success$@1$} & & \multicolumn{4}{c}{CIRCO\; mAP$@5$} & & \multicolumn{4}{c}{FashionIQ\; R$@10$}\\
\cmidrule(lr){2-5}\cmidrule(lr){7-10}\cmidrule(lr){12-15}
Policy & $T0$ & $T1$ & $T2$ & $T3$ & & $T0$ & $T1$ & $T2$ & $T3$ & & $T0$ & $T1$ & $T2$ & $T3$\\
\midrule
\textsc{Random-Q}     & 57.2 & 59.1 & 61.0 & 62.6 & & 22.1 & 22.8 & 23.6 & 24.4 & & 37.6 & 38.5 & 39.4 & 40.3\\
\textsc{Fixed-Q}      & 57.2 & 62.0 & 66.8 & 70.3 & & 22.1 & 23.7 & 25.4 & 26.9 & & 37.6 & 39.9 & 42.1 & 44.0\\
\textsc{MLLM-Q}       & 57.2 & 65.4 & 73.6 & 79.0 & & 22.1 & 25.0 & 27.9 & 30.1 & & 37.6 & 41.5 & 45.0 & 47.7\\
\textsc{EIG-Text}     & 57.2 & 73.0 & 84.1 & 89.8 & & 22.1 & 28.0 & 32.3 & 35.0 & & 37.6 & 45.1 & 50.4 & 53.8\\
\textbf{CLARA} (visual) & 57.2 & \textbf{78.0} & \textbf{88.0} & \textbf{93.1} & & 22.1 & \textbf{30.4} & \textbf{34.7} & \textbf{37.2} & & 37.6 & \textbf{47.8} & \textbf{52.9} & \textbf{56.0}\\
\midrule
\textsc{Oracle}       & 57.2 & 80.6 & 90.5 & 94.9 & & 22.1 & 31.8 & 36.2 & 38.4 & & 37.6 & 49.5 & 54.8 & 57.6\\
\bottomrule
\end{tabular}}
\caption{Interaction efficiency vs.\ round budget $T$ (same
retriever; only the clarification mechanism varies). Visual picking dominates the
strongest text-question policy at every budget and approaches the oracle.}
\label{tab:interact}
\end{table*}

\paragraph{Human study.}
\label{sec:exp_human}
We collect live interactions from $30$ participants over $600$ queries, comparing
visual picking against text questioning under the same retriever.
\tabref{tab:human} shows real-user efficiency tracks the simulated estimate and
that \emph{showing} beats \emph{asking} for real people, with the largest margin on
viewpoint and attribute ambiguity---exactly the categories text under-specifies.
Participants rate panels easier to use ($4.4$ vs.\ $3.9/5$), and because the
signal is a genuine human pick there is no answer-model to validate, removing the
circularity of text-question policies.

\begin{table}[t]
\centering\scalebox{0.82}{
\begin{tabular}{l rr rr}
\toprule
 & \multicolumn{2}{c}{Text questions} & \multicolumn{2}{c}{Visual pick (ours)}\\
\cmidrule(lr){2-3}\cmidrule(lr){4-5}
Split & TTS$\downarrow$ & S$@2\uparrow$ & TTS$\downarrow$ & S$@2\uparrow$\\
\midrule
Simulated   & 1.61 & 84.1 & 1.34 & 88.0\\
Human (all) & 1.66 & 83.2 & 1.42 & 86.5\\
\;Viewpoint & 1.94 & 76.0 & 1.49 & 84.8\\
\;Attribute & 1.81 & 79.3 & 1.45 & 86.1\\
\;Background & 1.58 & 84.0 & 1.40 & 86.9\\
\bottomrule
\end{tabular}}
\caption{Human study ($30$ users, $600$ queries). Visual
picking outperforms text questioning for real users and most on viewpoint and
attribute ambiguity; simulated and human results agree closely.}
\label{tab:human}
\end{table}

\paragraph{Cross-domain generalization.}
Trained on CIRR and evaluated zero-shot on FashionIQ, single-turn performance of
all methods drops under domain shift (CLARA R$@10$ $30.7$ vs.\ a point-estimate
retriever $29.1$---a tie). Two visual-clarification rounds recover much of the gap
(CLARA R$@10$ $40.6$) where the point-estimate retriever cannot, because
calibrated resolution knows \emph{when} it is uncertain and acts on it; visual
picking again exceeds text clarification ($40.6$ vs.\ $38.9$). This robustness
echoes cross-domain transfer findings~\cite{he2019one}.

\paragraph{Interaction cost.}
Because \Eqref{eq:decision} commits immediately on precise queries, only $46\%$ of
CIRR queries trigger any clarification. Each clarified query renders one panel of
$k{=}4$ prototypes; with a cached latent for $I_{\mathrm R}$ this adds $0.38$\,s on
average, and the generation-free medoid variant adds essentially nothing
(\tabref{tab:ablation}). The weighted-quantile recomputation is $O(n\log n)$ and
the generate-to-cover greedy runs over $C$ (mean $33$ items), not the corpus, so
policy cost is independent of corpus size. Full timings are in the supplement.

\subsection{Ablation study}
\label{sec:exp_ablation}

\paragraph{Component ablations.}
\tabref{tab:ablation} removes one component at a time on CIRR. Replacing
generate-to-cover with random modes, or visual picking with text questions, costs
the most efficiency, isolating the value of \emph{showing the right alternatives}.
Removing snap-to-corpus (selecting on raw generated images) hurts moderately and,
more importantly, would break the guarantee; the generation-free \emph{medoid}
variant retains most of the benefit at zero synthesis cost---a practical fallback.
Crucially, dropping the selection reweighting (``$-$ turn-valid'') barely changes
efficiency but \emph{breaks coverage} (it falls to $79\%$ by $T3$, cf.\
\tabref{tab:coverage}), underscoring that validity and efficiency are separate
axes. A hard membership likelihood ($\epsilon{=}0$) is brittle to mistaken~picks.

\begin{table}[t]
\centering\scalebox{0.82}{
\begin{tabular}{lrrr}
\toprule
Variant & S$@2\!\uparrow$ & TTS\,$\downarrow$ & Cov.$_{T3}$\\
\midrule
\textbf{Full model}                       & \textbf{88.0} & \textbf{1.34} & \textbf{89.6}\\
\;$-$ generate-to-cover (random modes)    & 82.0 & 1.84 & 89.3\\
\;$-$ visual pick (text questions)        & 84.1 & 1.61 & 89.5\\
\;$-$ snap-to-corpus (raw generated)      & 84.4 & 1.66 & 85.0\\
\;$-$ turn-valid (no reweighting)         & 87.2 & 1.40 & 79.0\\
\;\;medoid prototypes (no generation)     & 86.7 & 1.47 & 89.6\\
\;\;hard likelihood ($\epsilon{=}0$)      & 84.6 & 1.58 & 89.6\\
\bottomrule
\end{tabular}}
\caption{Component ablations on CIRR. Generate-to-cover and
visual picking drive efficiency; turn-valid reweighting drives coverage (note its
removal leaves efficiency intact but collapses $T3$ coverage).}
\label{tab:ablation}
\end{table}

\paragraph{Sensitivity.}
\tabref{tab:sensitivity} sweeps the operating knobs: performance is stable for
$k\!\in\![3,6]$, $\tau\!\in\![3,8]$, $\alpha\!\in\![0.1,0.2]$, with diminishing
returns beyond $M{=}3$. Small panels under-resolve while large panels overload the
user; a mid-range $k{=}4$ balances both.

\begin{table}[t]
\centering\scalebox{0.80}{
\begin{tabular}{l rr@{\hskip 12pt} l rr}
\toprule
$k$ & S$@2$ & TTS & $\tau$ / $\alpha$ & Clar.\% & S$@2$\\
\midrule
2  & 84.9 & 1.55 & $\tau{=}3$        & 60.8 & 88.6\\
3  & 87.3 & 1.41 & $\tau{=}5$        & 46.0 & 88.0\\
4  & 88.0 & 1.34 & $\tau{=}8$        & 32.7 & 85.4\\
6  & 87.6 & 1.30 & $\alpha{=}0.10$   & 46.0 & 88.0\\
8  & 85.1 & 1.27 & $\alpha{=}0.20$   & 38.1 & 85.7\\
\bottomrule
\end{tabular}}
\caption{Sensitivity to panel size $k$ (left), commit
threshold $\tau$ and risk $\alpha$ (right) on CIRR. Performance is stable across a
wide central range.}
\label{tab:sensitivity}
\end{table}

\paragraph{Robustness to imperfect and adversarial picks.}
\label{sec:exp_robust}
A real user---or a misspecified selection model---will sometimes pick the wrong
mode, which both wastes a round and, via \Eqref{eq:weights}, perturbs the weights
that the coverage guarantee relies on. \tabref{tab:robust} injects pick errors at
rate $\delta$ (a uniformly random wrong mode) and reports efficiency and $T3$
coverage. The floored likelihood (\Eqref{eq:bupdate}) is what keeps the true
target from being discarded after a single mistake: at $\delta{=}0.2$ the soft
update retains $84.0\%$ Success$@2$ while the hard rule ($\epsilon{=}0$) collapses
to $71.6\%$. Coverage degrades gracefully and stays close to nominal because the
weights estimated from the \emph{realized} (possibly wrong) picks still describe
the actual query shift; only when picks become adversarial ($\delta{=}0.5$,
worst-case mode) does coverage fall meaningfully, matching the total-variation
bound in Proposition~\ref{prop:coverage}. This is the practical counterpart of the
theorem: validity is robust to honest mistakes and degrades smoothly under
misspecification rather than breaking.

\begin{table}[t]
\centering\scalebox{0.82}{
\begin{tabular}{l rr rr}
\toprule
 & \multicolumn{2}{c}{Soft (ours, $\epsilon{=}0.05$)} & \multicolumn{2}{c}{Hard ($\epsilon{=}0$)}\\
\cmidrule(lr){2-3}\cmidrule(lr){4-5}
Pick-error $\delta$ & S$@2\uparrow$ & Cov.$_{T3}$ & S$@2\uparrow$ & Cov.$_{T3}$\\
\midrule
$0.0$ (clean)            & 88.0 & 89.6 & 84.6 & 89.6\\
$0.1$                    & 86.2 & 89.1 & 79.4 & 88.0\\
$0.2$                    & 84.0 & 88.3 & 71.6 & 85.1\\
$0.5$ (adversarial)      & 73.1 & 81.7 & 55.2 & 70.3\\
\bottomrule
\end{tabular}}
\caption{Robustness to imperfect picks on CIRR. The floored
likelihood keeps efficiency and coverage stable under honest mistakes; coverage
falls only under adversarial picking, as the total-variation bound predicts.}
\label{tab:robust}
\end{table}

\subsection{Case study}
\label{sec:exp_case}
\begin{figure*}[t]
  \centering
  \includegraphics[width=\textwidth]{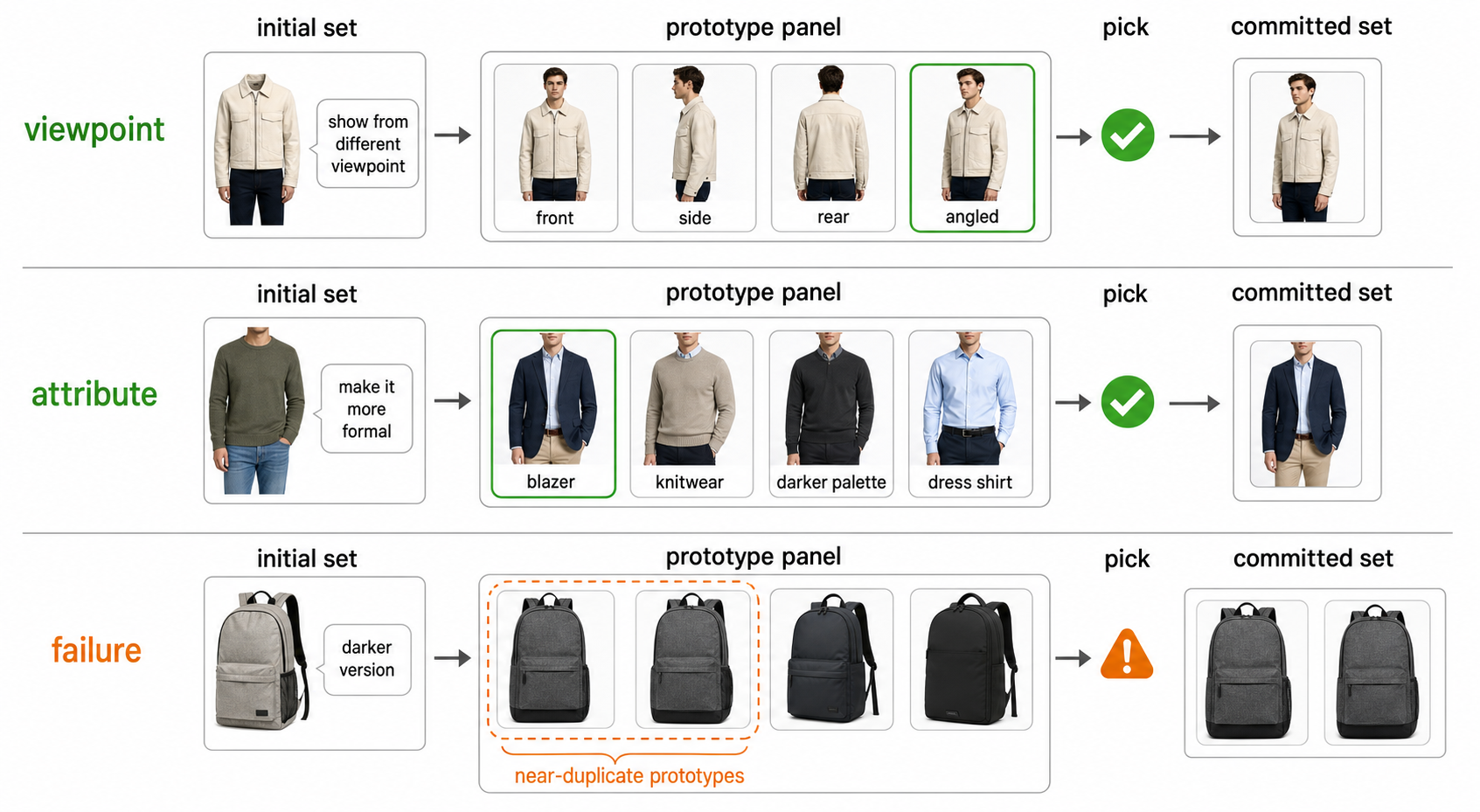}
  \caption{\textbf{Qualitative interactions and a failure mode.} Rendering the
  modes exposes the decision the user actually needs to make; the residual failure
  is near-duplicate prototypes within a panel.}
  \label{fig:quali}
\end{figure*}
\figref{fig:quali} traces example dialogues: a panel renders ``what each plausible
answer looks like,'' one pick contracts the set, and the committed set contains
the target. We highlight two representative cases. \emph{(i) Viewpoint ambiguity:}
the edit ``show it from the other side'' yields a large initial set spanning
front- and rear-facing candidates; the panel renders both viewpoints, one pick
removes the ambiguity in a single round, where a text question (``change the
viewpoint?'') leaves the direction underspecified. \emph{(ii) Attribute
ambiguity:} ``make it more formal'' renders distinct styling modes (blazer vs.\
knitwear vs.\ darker palette) that are hard to enumerate in words; the user
selects the intended look directly. The dominant failure is near-duplicate
prototypes within a panel (low inter-mode separation), which the diversity term in
\Eqref{eq:cover} mitigates but does not eliminate---a category our auxiliary-axis
annotations make explicit and that connects to fine-grained structured
reasoning~\cite{hu2025spade}, pointing to per-axis panel design as future work.

\section{Conclusion}
\label{sec:conclusion}
We revisited interactive composed image retrieval under the view that a query
names a region of the corpus rather than a point. Prior work begins to model this
with a conformal layer and text clarification, but its coverage guarantee holds
only at the first turn and its text channel is both low-bandwidth and circular. We
showed that interactive clarification is a \emph{feedback covariate shift} and
gave \textbf{CLARA}, which (i) reweights the conformal calibration by the
selection-induced likelihood ratio for a \emph{turn-valid} coverage guarantee that
provably holds at every committed round, and (ii) replaces asking with
\emph{showing}---rendering the modes of the candidate set into a coverage-driven,
snap-to-corpus panel that the user resolves by a single pick, removing the
answer-model entirely. CLARA is single-turn competitive, holds nominal coverage
across rounds where naive conformal drifts by up to ten points, and reaches the
target in fewer rounds than the strongest text policy, with the largest gains
where seeing beats asking.

\paragraph{Limitations.}
Proposition~\ref{prop:coverage} assumes a correctly specified selection model
$\pi$; misspecification degrades the bound by a total-variation term, and severe
distribution shift between calibration and deployment still erodes coverage,
which we only partially address through Mondrian stratification. Rendering adds
inference cost on clarified queries, though the medoid variant removes it at a
small efficiency cost. Our scaled evaluation uses a model-driven picker; while we
validate against real users on $600$ queries, simulated users cannot fully
capture human inconsistency. Finally, panels with near-duplicate prototypes can
make picks ambiguous.

\paragraph{Future work.}
Per-axis panel design that renders the specific ambiguity a query exhibits;
multi-pick and multi-turn dialogue using CIRR's dialogue
paths~\cite{Liu_2021_cirr,dong2025kmg}; online conformal~\cite{aci} to relax the
selection-model assumption; and extending render-and-pick resolution to video and
cross-modal retrieval~\cite{Xu_2019-T2C}. Treating retrieval as calibrated,
\emph{visually} interactive intent resolution---measure ambiguity, commit when
confident, show alternatives when not---is, we hope, a step beyond the
one-query-one-target assumption.


\end{document}